\definecolor{darkblue}{rgb}{0,0.08,0.45}
\newtheorem{theorem}{Theorem}
\definecolor{LightCyan}{rgb}{0.88,1,1}
\newcommand\wu[1]{\textcolor{black}{#1}}
\newcommand\john[1]{\textcolor{black}{#1}}
\newcommand\wbq[1]{\textcolor{black}{#1}}
\begin{document}

\title{Enhancing the Robustness via Adversarial Learning and Joint Spatial-Temporal Embeddings in Traffic Forecasting\\
}



\author{Juyong Jiang}
\authornote{Equal contribution.}
\affiliation{%
  \institution{The Hong Kong University of Science and Technology (Guangzhou)}
  \streetaddress{}
  \city{}
  \country{}}
\email{csjuyongjiang@gmail.com}
\author{Binqing Wu}
\authornotemark[1]
\affiliation{%
  \institution{Zhejiang University}
  \streetaddress{}
  \city{}
  \country{}}
\email{binqingwu@cs.zju.edu.cn}

\author{Ling Chen}
\authornote{Corresponding authors.}
\affiliation{%
  \institution{Zhejiang University}
  \streetaddress{}
  \city{}
  \country{}}
\email{lingchen@cs.zju.edu.cn}

\author{Kai Zhang}
\authornotemark[2]
\affiliation{%
  \institution{East China Normal University}
  \streetaddress{}
  \city{}
  \country{}}
\email{kzhang@cs.ecnu.edu.cn}

\author{Sunghun Kim}
\affiliation{%
  \institution{The Hong Kong University of Science and Technology (Guangzhou)}
  \streetaddress{}
  \city{}
  \country{}}
\email{hunkim@ust.hk}
\renewcommand{\shortauthors}{Jiang and Wu, et al.}

\begin{abstract}
Traffic forecasting is an essential problem in urban planning and computing. The complex dynamic spatial-temporal dependencies among traffic objects (e.g., sensors and road segments) have been calling for highly flexible models; unfortunately, sophisticated models may suffer from poor robustness especially in capturing the trend of the time series (1st-order derivatives with time), leading to unrealistic forecasts. To address the challenge of balancing dynamics and robustness, we propose TrendGCN, a new scheme that extends the flexibility of GCNs and the distribution-preserving capacity of generative and adversarial loss for handling sequential data with inherent statistical correlations. On the one hand, our model simultaneously incorporates spatial (node-wise) embeddings and temporal (time-wise) embeddings to account for heterogeneous space-and-time convolutions; on the other hand, it uses GAN structure to systematically evaluate statistical consistencies between the real and the predicted time series in terms of both the temporal trending and the complex spatial-temporal dependencies. Compared with traditional approaches that handle step-wise predictive errors independently, our approach can produce more realistic and robust forecasts. Experiments on six benchmark traffic forecasting datasets and theoretical analysis both demonstrate the superiority and the state-of-the-art performance of TrendGCN. Source code is available at \href{https://github.com/juyongjiang/TrendGCN}{\color{blue}{https://github.com/juyongjiang/TrendGCN}}.
\end{abstract}

\begin{CCSXML}
<ccs2012>
 <concept>
  <concept_id>10010520.10010553.10010562</concept_id>
  <concept_desc>Computer systems organization~Embedded systems</concept_desc>
  <concept_significance>500</concept_significance>
 </concept>
 <concept>
  <concept_id>10010520.10010575.10010755</concept_id>
  <concept_desc>Computer systems organization~Redundancy</concept_desc>
  <concept_significance>300</concept_significance>
 </concept>
 <concept>
  <concept_id>10010520.10010553.10010554</concept_id>
  <concept_desc>Computer systems organization~Robotics</concept_desc>
  <concept_significance>100</concept_significance>
 </concept>
 <concept>
  <concept_id>10003033.10003083.10003095</concept_id>
  <concept_desc>Networks~Network reliability</concept_desc>
  <concept_significance>100</concept_significance>
 </concept>
</ccs2012>
\end{CCSXML}

\ccsdesc[500]{Information systems~Spatial-temporal systems}
\ccsdesc[100]{Networks~Network robustness}

\keywords{Spatial-Temporal Embeddings; Robustness; Traffic Forecasting}


\maketitle

\section{Introduction}
\label{introduction}
Traffic forecasting, as one of the essential parts of the intelligent transportation system, plays an irreplaceable role in developing a smart city \cite{traffi_gcn_survey,dl_traffic_survey}. It aims to accurately predict future traffic data, e.g., traffic flow and speed, given historical traffic data recorded by sensors on a road network \cite{dcrnn}. It is a highly challenging task due to dynamic spatial and temporal dependencies within the road network. As shown in Fig. \ref{fig:dynamic}, spatially, the traffic conditions of nearby sensors have dynamic dependencies on each other. Temporally, current traffic data are dependent on historical observations \wu{in a dynamic way}. Spatial and temporal dependencies vary with time due to various factors, e.g., weather and traffic accidents. 
\begin{figure}[t]
	\centering
        \includegraphics[width=0.9 \linewidth]{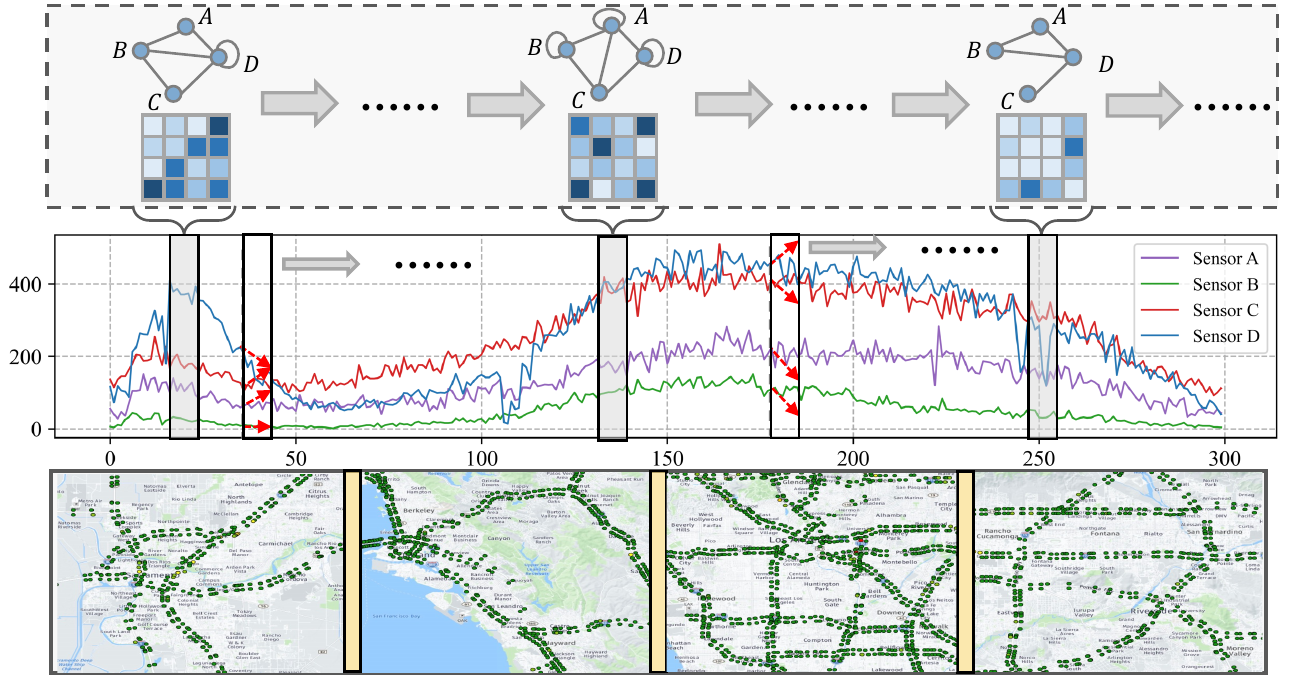}
        \caption{\john{Traffic flow data observed from 4 sensors from the PEMS04 dataset (sensor B, C, and D are adjacent to each other, and sensor A is distant from all of them). Top row: dynamic spatial-temporal dependencies among the sensors; middle row: raw time series signal from the sensors and red arrows signifying the trend (derivative); bottom row: geographical locations of the sensors. 
        }}
	\label{fig:dynamic}
\end{figure}
Many approaches have been proposed for traffic forecasting, continuously improving from shallow machine learning \cite{arima,var,svr} to recurrent neural network (RNN) and convolutional neural network (CNN) based deep learning \cite{fc_lstm,cnn-rnn,cnn-rnn-2}. Although these works can capture temporal dependencies and regular spatial dependencies, they can not adequately model non-Euclidean spatial dependencies dominated by irregular road networks. Towards this problem, graph neural networks (GNN) \cite{gnn} have been introduced in traffic forecasting owing to their superior ability to deal with irregular graph-structured data. These GNN-based works normally represent sensors as nodes and spatial dependencies between sensors as edges and leverage adjacency matrices to describe spatial dependencies of road networks \cite{gnn_survey,traffi_gcn_survey}. Recently, spatial-temporal graph neural networks (STGNNs) \cite{dcrnn,stgcn,astgcn,gman,astgnn,dstagnn}, a group of approaches integrating GNNs to model spatial dependencies with RNNs, CNNs, or Attentions to model temporal dependencies, have shown the state-of-the-art performance for traffic forecasting. 

Despite the success, there are still some limitations with current STGNNs, which we discuss below.

Firstly, most existing STGNNs rely on a basic assumption that spatial dependencies are fixed over time. Therefore, static graphs, e.g., distance graphs \cite{stgcn,astgcn,astgnn}, temporal similarity graphs \cite{stfgcn,stgode}, static adaptive graphs \cite{agcrn,mtgnn}, and their combinations \cite{graph_wavenet,stmgcn,tfgan}, are typically used to model spatial dependencies. These works do not cater to the changing nature of dependencies between nodes (shown in Fig. \ref{fig:dynamic}(a)) and cannot handle dynamic spatial dependencies. Some attempts \cite{slcnn,dgcrn,dstagnn} have tried to model such dynamics for traffic forecasting. \wu{They design feature extraction mechanisms to quantify changing patterns from the data, and with the help of domain knowledge (e.g., road occupancy rates and weather conditions) to construct time-varying spatial graphs. Compared to those based on static graphs, these works can make more realistic predictions. However, when there exist outlier points or interrupts, they could generate bad predictions, due to the sensitivity to the temporal changes (see Fig. 2(b)). Such a phenomenon calls for effective constraints on  global properties for robust time series forecasting.
}

\wu{Intuitively, since the trend of traffic data represents the average traffic conditions over time, we take the trend as a representative global property of time series. However, most existing STGNNs \cite{gnn_survey,traffi_gcn_survey,dl_traffic_survey} adopt the mean absolute error (MAE) as a loss function to evaluate the predictions and supervise the model training, which treats each predicted result individually and can not take the trends for global constraints.
As illustrated in Fig. \ref{fig:curvature}(a), the blue and the pink curves have the same magnitude $\mathcal{L}^{P_1}_{MAE}=\mathcal{L}^{P_2}_{MAE}$. The blue curve looks less desirable than the pink one when a sudden change happens around $t = 5$, as its trend is opposite to that of the ground truth, while the pink curve is consistent with the ground truth. Therefore, we should introduce more reliable constraints on trends. In particular, we term the phenomenon that predictions have different trends with the same loss values as \texttt{trend discrepancy}.} 

\wu{Recently, a few works \cite{ast,tfgan} have been proposed to eliminate trend discrepancies via GAN. They construct the true and fake samples for discriminators by concatenating inputs with predictions (from the generator) and ground truth (from the dataset), respectively. Since these works take the whole sequence in error evaluation, they can eliminate trend discrepancies and error accumulation to some extent. However, the dynamic spatial dependencies in the generator are not fully taken into account, which are crucial to capturing the changing nature of traffic systems. Moreover, spatial dependencies in the predicted results are not modelled explicitly. Since spatial dependencies reflect the hidden correlations between the trends of traffic data, they should also align with the dependencies in the ground truth.} 

\begin{figure}[t]
	\centering
        \includegraphics[width=\linewidth]{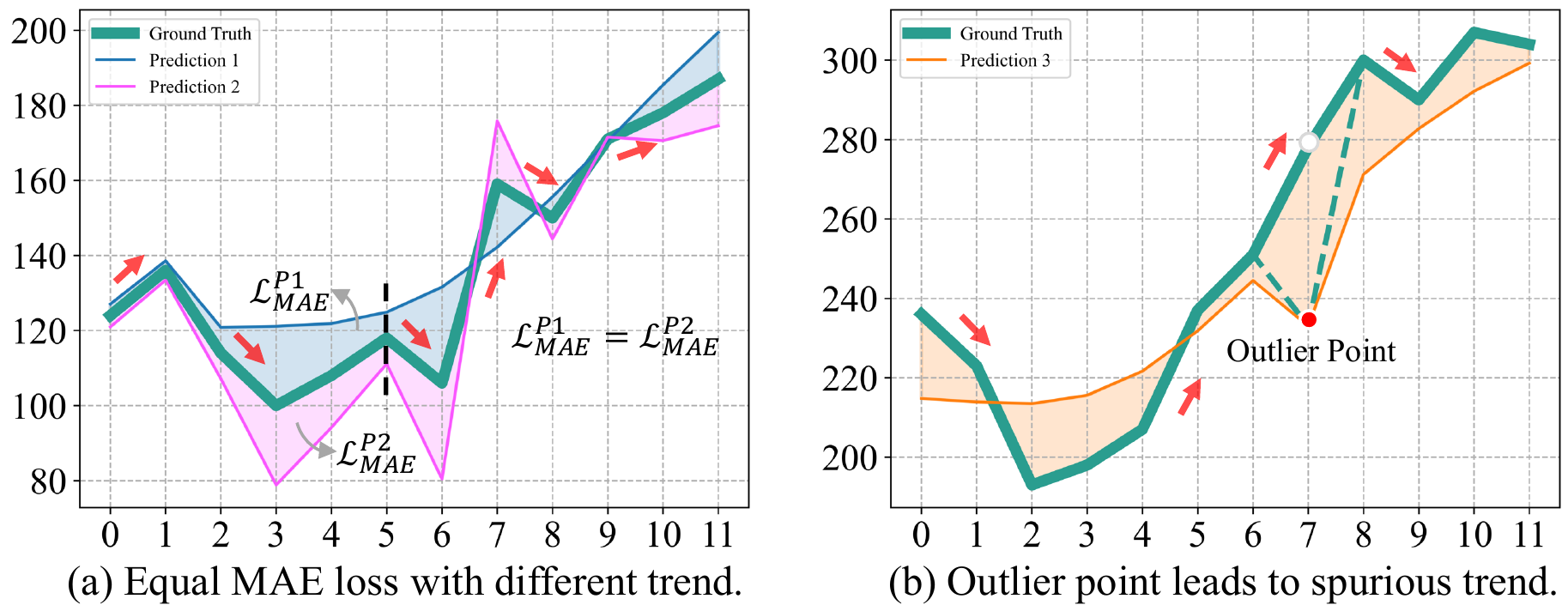}
        \caption{A motivating example with thick green curve being the raw time series and red arrows signifying the trend (derivative). {(a): prediction 1 (blue curve) and prediction 2 (pink) have the same MAE, but prediction 1 is obviously more realistic; (b): penalizing mean approximation error of the derivative of the time series can be very sensitive to outliers in the signal and lead to undesired prediction.}} 
	\label{fig:curvature}
\end{figure}
\wu{To this end, we propose TrendGCN to solve the two aforementioned problems: 1) how to model dynamic spatial dependencies concisely and effectively; 2) how to coordinate the trend discrepancies with dynamic modeling to improve the robustness}. The main contributions of our work are summarized as follows:


\begin{itemize}
    \item \john{We propose TrendGCN, a new scheme combing the flexibility of GCNs and the capacity of generative and adversarial loss in sequential data with inherent statistical correlations. It employs simultaneous spatial (node-wise) embedding and temporal (time-wise) embedding to account for heterogeneous space-and-time convolutions.} 
    \item \john{We introduce adversarial training to systematically evaluate both the trend-level and dependency-level discrepancies between the true data and the predicted results, thus being more robust in generating a desired trend than handling step-wise prediction errors independently.}
    \item \john{We evaluate the proposed model on six benchmarks traffic forecasting datasets. Extensive experiments and theoretical analysis both demonstrate the superiority and the state-of-the-art performance of TrendGCN.}
\end{itemize}

\section{Related Work}
\subsection{STGNNs for Traffic Forecasting}

Spatial-temporal graph neural networks (STGNNs) \cite{dcrnn,stgcn,stmgcn,astgcn,gman,astgnn} have shown remarkable performance and achieved state-of-the-art in traffic forecasting. They mainly integrate GNNs to model non-Euclidean spatial dependencies with RNNs, CNNs, and Attentions to model temporal dependencies \cite{gnn_survey,traffi_gcn_survey}. However, many existing STGNNs utilize static adjacency matrices, which neglect the changing nature of spatial dependencies in road networks.

Some recent STGNNs \cite{stemgnn,dstagnn,dgcrn,slcnn} are designed to model dynamic spatial dependencies. For example, DGCNN \cite{dgcnn} decomposes the static and dynamic components of traffic data based on a pre-trained tensor decomposition layer to obtain the dynamic Laplacian matrix at any time. SLCNN \cite{slcnn} proposes global and local time-varying structure learning convolutional modules. Each module encodes the static structure by a learnable matrix, and the dynamic structure by a function \john{taking the current samples as inputs.} DCGRN \cite{dgcrn} adopts dynamic adjacency matrices by integrating dynamic context features, e.g., the speed and the time of day. DSTAGNN \cite{dstagnn} obtains the dynamic adjacency matrix according to a cosine distance based distance adjacency matrix and an improved self-attention. 
\wu{However, these works usually rely on complex mechanisms to capture dynamic dependencies, which may introduce too many parameters and face the high risk of over-fitting. In addition, some of them depend on domain dynamic factors (e.g., road occupancy rates and weather conditions) heavily, losing the robustness and generalization of models for different applications to some extent. Therefore, how to design an architecture to model dynamic spatial dependencies concisely yet effectively is an open problem for both academic and industrial communities.}

\subsection{GANs for Times Series}
Generative Adversarial Networks (GANs) can learn to produce realistic data adversarially. They have achieved remarkable success in computer vision \cite{gan_cv} and natural language processing \cite{gan_nlp}, and have also shown promise in time series analysis. TimeGAN \cite{timegan} first introduces GANs to time series generation. It utilizes GANs based on a learned embedding space to generate time series that preserves temporal dynamics. AST \cite{ast} promotes GANs for time series forecasting. It adopts a sparse transformer as the generator to learn a sparse attention map and uses a discriminator to eliminate the error accumulation at the sequence level. TrafficGAN \cite{trafficgan} utilizes GANs for traffic forecasting. It applies CNN and LSTM to capture the spatial-temporal dependencies, with adversarial training to learn the distribution of future traffic flows. 
More recently, TFGAN \cite{tfgan} integrates GAN and GCNs for traffic forecasting, which uses GAN to learn the distribution of the time series data. Specifically, multiple static graphs are constructed within the generator to model spatial dependencies. The discriminator constructs the true and fake samples at the sequence level by concatenating inputs with predictions and ground truth, respectively. 

These models typically use GANs for learning the distribution of time series data from a static perspective, but not fully catering to dynamic spatial dependencies in the generative or discrimination process.  In addition, these methods barely explicitly consider the global properties of traffic data, e.g., the overall trend of each time series and the correlations between different sensors (or channels), which are critical for traffic forecasting.

\section{Methodology}
\subsection{Problem Definition}
In this paper, we aim to solve multi-step traffic forecasting problems, given the observed historical time series. Formally, we define these time series as a set $\boldsymbol{X}^{1:T} = \{\boldsymbol{X}^{(1)}, \boldsymbol{X}^{(2)}, \cdots \boldsymbol{X}^{(t)}, \cdots \boldsymbol{X}^{(T)}\} \in \mathbb{R}^{T\times N \times F}$, where $\boldsymbol{X}^{(t)} \in \mathbb{R}^{N \times F}$ denotes observed values with $F$ feature dimensions of $N$ nodes at time step $t$, and $\boldsymbol{X}_i^{(t)}$ represents the value of the $i$-th node at time step $t$. Our target is to find a mapping function $\mathcal{F}$ to forecast the next $H$ steps data based on the past $T$ steps data. Thus, the traffic forecasting problem can be formulated as follows: 
\begin{equation}
\begin{aligned}
    \hat{\boldsymbol{X}}^{T+1:T+H} = \mathcal{F}(\boldsymbol{X}^{1:T}; \boldsymbol{\Theta})
\end{aligned}
\end{equation}
where $\hat{\boldsymbol{X}}^{T+1:T+H} \in \mathbb{R}^{H \times N \times O}$, $H$ denotes the forecasting horizon and $O$ is the output feature dimensions of each node. $\mathcal{F}$ is the mapping function, and  $\boldsymbol{\Theta}$ denotes all learnable parameters in the model.
\begin{figure*}[t]
	\centering
	\includegraphics[width=0.7\linewidth]{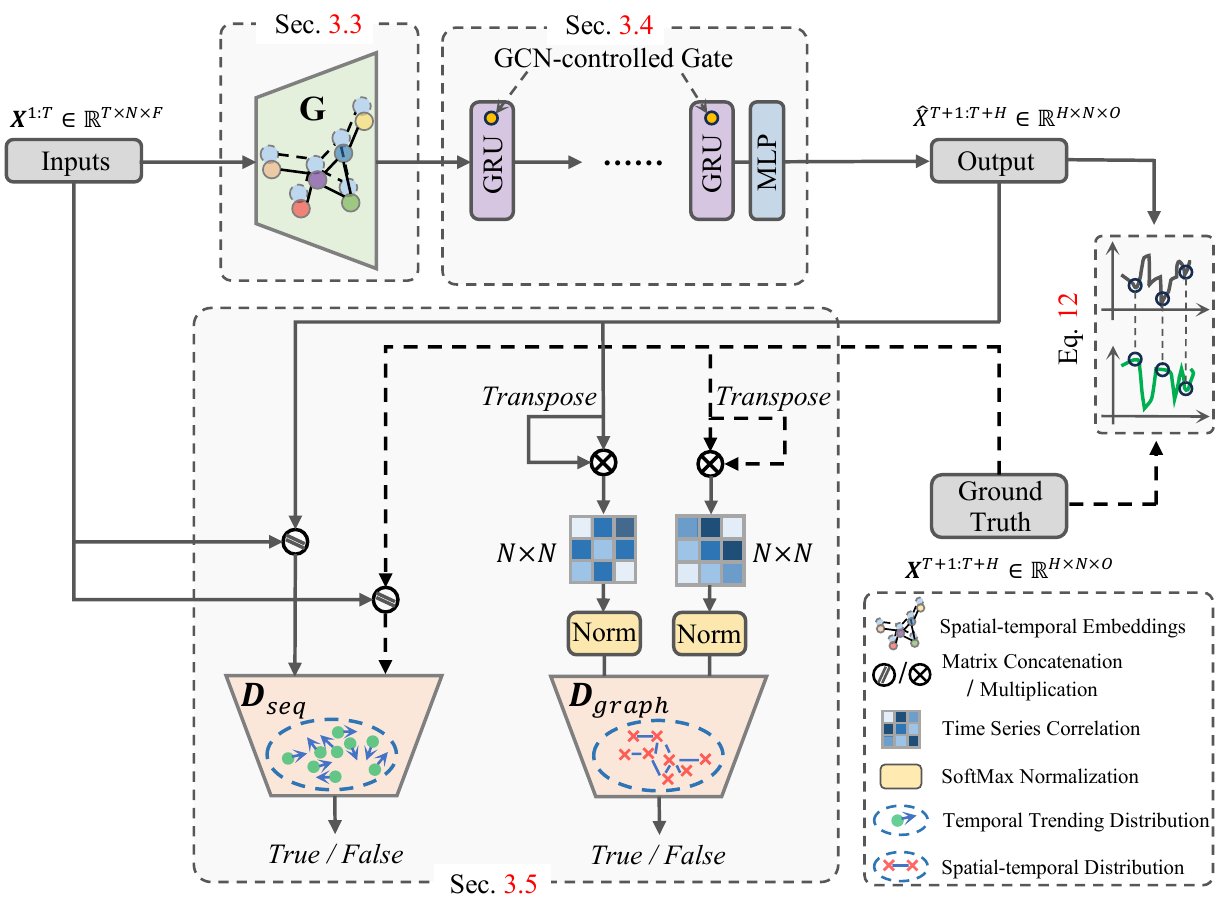}
	\caption{The model architecture of the proposed TrendGCN. The Detailed description of each proposed component can be found in the corresponding section (marked by the red digit). }
	\label{fig:daagcn}
\end{figure*}
\subsection{Model Overview}
\wu{Fig. \ref{fig:daagcn} shows the architecture of TrendGCN that mainly consists of a generator with dynamic adaptive graph generation for capturing dynamic spatial dependencies and two discriminators for evaluating and trying to eliminate the trend-level and dependency-level discrepancies}

\subsection{Dynamic Adaptive Graph Generation}
\wu{Recently, adaptive graph generation methods have been prevalent for traffic forecasting, as they can learn spatial dependencies from data automatically and help to find some hidden patterns. Particularly, some works \cite{graph_wavenet,agcrn,magnn} learn graphs in a simple way. They parameterize the representations of all nodes directly using learnable node-wise embeddings, calculate the pairwise similarity of these representations, and treat this similarity matrix as the adjacency matrix of nodes. However, these works can only obtain static graphs and can not model the changing spatial dependencies among nodes. Therefore, we propose a \underline{D}ynamic \underline{A}daptive \underline{G}raph \underline{G}eneration module to model dynamic spatial dependencies concisely yet effectively in an adaptive fashion.}

\wu{Inspired by the positional embeddings of Transformers \cite{dai2019transformer,jiang2020cascaded}, we utilize two types of embeddings, spatial embeddings $\bm{E}_{\text{node}}=\{\bm{e}_{\text{node}}^{(1)},\bm{e}_{\text{node}}^{(2)}, \cdots,\bm{e}_{\text{node}}^{(N)}\} \in R^{N \times d_{\text{e}}}$ and temporal embeddings $\bm{E}_{\text{time}}=\{\bm{e}_{\text{time}}^{(1)},\bm{e}_{\text{time}}^{(2)}, \cdots,\bm{e}_{\text{time}}^{(T)}\} \in R^{T \times d_{\text{e}}}$ to denote the unique representations of each node and each time step, respectively. In detail, the $i$th row of $\bm{E}_{\text{node}}$ denotes the representations of the $i$th node, the $i$th row of $\bm{E}_{\text{time}}$ denotes the representations of the $i$th time step, and $d_{\text{e}}$ is the hidden dimension of spatial and temporal embeddings.} 

\wu{We introduce a unified scheme to effectively couple the spatial (node-wise) and temporal (time-wise) embeddings through a gate module and use the integrated embeddings to construct graphs changing over time. The process can be formulated as:}
\begin{equation}
\label{eq:node_time_1}
\begin{aligned}
\bm{\mathcal{A}}_{ij}^{(t)} = \bm{\lambda}\left \langle \text{Dpt}\left(\text{LN}(\bm{e}_{{\text{node}}}^{(i)} \Delta_1 \bm{e}_{{\text{time}}}^{(t)})\right), \text{Dpt}\left(\text{LN}(\bm{e}_{{\text{node}}}^{(j)} \Delta_2 \bm{e}_{{\text{time}}}^{(t)})\right)\right \rangle
\end{aligned}
\end{equation}
\wu{where $\Delta_1, \Delta_2$ denote two operators selected from a set of candidate operators: addition, Hadamard production, and concatenation, abbreviated as $\{+, \odot, \mathbin{\|}\}$; the $\text{LN}$ and $\text{Dpt}$ denote Layer Normalization and Dropout operation, respectively. $\left \langle\cdot, \cdot\right \rangle$ denotes the inner product, and $\bm{\lambda}$ represents the important weights of each kind of information term. The choices of $\Delta_1, \Delta_2$ can be the same or different, and the corresponding experiment results and analysis about their combinations are in the Appendix. In particular, when $\Delta_1=+, \Delta_2=+$, Eq. \ref{eq:node_time_1} can be expanded as:}
\begin{equation}
\label{eq:node_time_expand}
\begin{aligned}
\bm{\mathcal{A}}_{ij}^{(t)} &= \bm{\lambda}\left \langle\text{Dpt}\left(\text{LN}(\bm{e}_{{\text{node}}}^{(i)} + \bm{e}_{\text{time}}^{(t)})\right), \text{Dpt}\left(\text{LN}(\bm{e}_{{\text{node}}}^{(j)} + \bm{e}_{\text{time}}^{(t)})\right)\right \rangle \\
&= \lambda_1\underbrace{\left \langle\bm{e}_{{\text{node}}}^{(i)}, \bm{e}_{{\text{node}}}^{(j)}\right \rangle}_{\text{\john{spatial} homologous terms}} + \lambda_2\underbrace{\left \langle\bm{e}_{{\text{node}}}^{(i)}, \bm{e}_{\text{time}}^{(t)}\right \rangle + \left \langle\bm{e}_{{\text{node}}}^{(j)}, \bm{e}_{\text{time}}^{(t)}\right \rangle}_{\text{\john{spatial-temporal} heterologous terms}} \\
&+ \lambda_3\underbrace{\left \langle\bm{e}_{\text{time}}^{(t)}, \bm{e}_{\text{time}}^{(t)}\right \rangle}_{\text{\john{temporal} homologous terms}} 
\end{aligned}
\end{equation}
\wu{This formulation allows not only homogeneous interactions in the spatial and temporal domains, respectively, but also allows the embedding of the $i$th node and the $j$th time step to interact directly with each other. Thus, the construed graph can represent the spatial, temporal, and spatial-temporal interactions simultaneously, which has a stronger representative ability than a static adaptive graph that only focuses on spatial interactions. In particular, a static adaptive graph is a special case of our graph when $\lambda_2$ and $\lambda_3$ are equal to zero.}

Finally, following previous works \cite{agcrn,rgsl}, we employ $1^{st}$ order Chebyshev polynomial expansion to approximate graph convolution with parameters \wu{that are specific to the combinations of spatial and temporal embeddings $\bm{E}_{\text{nt}}$}, then the graph convolution can be formulated as: 
\begin{equation}
\begin{aligned}
    \bm{H_{\bm{\mathcal{G}}}}^{(l+1)} = (\bm{I_N} + \text{Norm}(\bm{\mathcal{A}}^{(t)})) \bm{H_{\bm{\mathcal{G}}}}^{(l)} \bm{E}_{\text{nt}} \bm{W_\mathcal{G}}^{(l)} + \bm{E}_{\text{nt}} \bm{\mathbf{b}_\mathcal{G}}^{(l)}
\end{aligned}
\end{equation}
\begin{equation}
\begin{aligned}
    \bm{H_{\bm{\mathcal{G}}}^{(0)}} = \bm{X}^{(t)}, \ \bm{E}_{\text{nt}}= \bm{E}_{{\text{node}}} \Delta_1 \bm{E}_{{\text{time}}}^{(t)}
\end{aligned}
\end{equation}
where $\bm{I_N}$ is the identity connection of $N$ nodes, $\text{Norm}$ is $Softmax$ normalization; $\bm{W_\mathcal{G}}^{(l)} \in R^{d \times F \times O}$ and $\bm{b_\mathcal{G}}^{(l)} \in R^{d \times O}$ represents a weight pool and a bias pool, respectively. \wu{During training, $\bm{E}_{{\text{node}}}$ and $\bm{E}_{\text{time}}$ are updated. Thus, the constructed graphs are dynamics, and the parameters of the graph convolution operation $\bm{E}_{\text{nt}} \bm{W_\mathcal{G}}^{(l)}$ and $\bm{E}_{\text{nt}} \bm{\mathbf{b}_\mathcal{G}}^{(l)}$ are specific to nodes and time steps.}


\subsection{Dynamic Graph Convolutional GRU} 
Following prior works \cite{agcrn,rgsl}, we integrate the proposed DAGG module to Gated Recurrent Units (GRU) \cite{chung2014empirical} by replacing the MLP layers in GRU. Then, we stack several GRU layers followed by a linear transformation (MLP) to project the $T$-th output of GRU to \wu{achieve $H$ steps ahead predictions in the manner of sequence to sequence, which significantly decreases the cost of time and error accumulation.} 
Formally, it can be formulated as:
\begin{equation}
\begin{aligned}
&\bm{z}^{(t)} =\sigma(\bm{\mathcal{G}}((\bm{X}^{(t)} \mathbin{\|} \bm{h}^{(t-1)});\bm{\Theta}_z)) \\
&\bm{r}^{(t)} =\sigma(\bm{\mathcal{G}}((\bm{X}^{(t)} \mathbin{\|} \bm{h}^{(t-1)});\bm{\Theta}_r)) \\
&\bm{c}^{t} = tanh(\bm{\mathcal{G}}((\bm{X}^{(t)} \mathbin{\|} \bm{r}^{(t)} \odot \bm{h}^{(t-1)});\bm{\Theta}_c)) \\
&\bm{h}^{(t)} =\bm{z}^{(t)} \odot \bm{h}^{(t-1)} + (1 - \bm{z}^{(t)}) \odot \bm{c}^{t}
\end{aligned}
\end{equation}
\begin{equation}
\label{eq:pred}
\begin{aligned}
&\hat{\bm{X}}^{T+1:T+H}=\text{Dpt}(\text{LN}(\bm{h}^{(T)}))\bm{W} + \bm{b}
\end{aligned}
\end{equation}  
where $\bm{X}^{(t)} \in R^{T \times N \times F} $ and $\bm{h}^{(t)} \in R^{1 \times N \times F'}$ represent input and hidden representation of GRU at time step $t$, $\mathbin{\|}$ denotes the concatenation operation, $\bm{z}^{(t)}$ and $\bm{r}^{(t)}$ denote reset gate and update gate at time step $t$, respectively. Three $\bm{\mathcal{G}}$ represents DAGG module with different learnable parameters $\bm{\Theta}_z$, $\bm{\Theta}_r$, and $\bm{\Theta}_c$. $\bm{W} \in R^{F' \times HO}$ and $\bm{b} \in R^{1 \times HO}$ are weight parameters in linear transformation (MLP). $H$ denotes the predicted future steps and $\hat{\bm{X}}^{T+1:T+H} \in R^{H \times N \times O}$ is the final prediction results. 

\subsection{Adversarial Dynamic Trend Alignment}
\label{adv_training}
\wu{We introduce two discriminators with adversarial training to take the global properties (trends and inherent statistical correlations) into consideration, which systematically evaluate trend-level and dependency-level discrepancies and further improve the robustness. Specifically, the discriminator $\bm{\mathcal{D}}_{\text{seq}}$ focuses on the trend of individual time series, and the discriminator $\bm{\mathcal{D}}_{\text{graph}}$ emphasizes the correlation of multivariate time series. Both discriminators consist of three fully connected linear layers \cite{ast} with $LeakReLU$. Formally, the loss functions of this min-max optimization problem are formulated as:}

\begin{equation}
\begin{aligned}
\label{eq:adv_loss_seq}
\mathcal{L}_{\bm{\mathcal{D}}_{\text{seq}}}=- &\mathbb{E}_{x_r^1 \sim \mathbb{P}} [ \log ( \bm{\mathcal{D}}_{\text{seq}} ( \bm{X}^{1:T} \mathbin{\|} \bm{X}^{T+1:T+H} ) )]\\
-\mathbb{E}_{x_{f}^1 \sim \mathbb{Q}}& [\log ( 1 - \bm{\mathcal{D}}_{\text{seq}} ( \bm{X}^{1:T} \mathbin{\|} \hat{\bm{X}}^{T+1:T+H}))]
\end{aligned}
\end{equation}
\begin{equation}
\begin{aligned}
\label{eq:adv_loss_graph}
\mathcal{L}_{\bm{\mathcal{D}}_{\text{graph}}}=- &\mathbb{E}_{x_r^2 \sim \mathbb{P}} [\log ( \bm{\mathcal{D}}_{\text{graph}}( \delta((\bm{X}^{T+1:T+H})^{\mathcal{T}}\bm{X}^{T+1:T+H}))]\\
-\mathbb{E}_{x_{f}^2 \sim \mathbb{Q}} [\log & ( 1 - \bm{\mathcal{D}}_{\text{graph}}  \delta((\hat{\bm{X}}^{T+1:T+H})^{\mathcal{T}}\hat{\bm{X}}^{T+1:T+H}) ) ) ] 
\end{aligned}
\end{equation}
\begin{equation}
\begin{aligned}
\label{eq:adv_loss_g}
\mathcal{L}_{\text{adv}} = \alpha&( -\mathbb{E}_{x_r^1 \sim \mathbb{P}} [ \log (1 - \bm{\mathcal{D}}_{\text{seq}} ( \bm{X}^{1:T} \mathbin{\|} \bm{X}^{T+1:T+H} ) ) ]\\
&- \mathbb{E}_{x_f^1 \sim \mathbb{Q}} [ \log (\bm{\mathcal{D}}_{\text{seq}} (\bm{X}^{1:T} \mathbin{\|} \hat{\bm{X}}^{T+1:T+H} ) ) ] )\\
+\beta(-\mathbb{E}_{x_r^2 \sim \mathbb{P}}& [\log (1 - \bm{\mathcal{D}}_{\text{graph}} ( \delta((\bm{X}^{T+1:T+H})^{\mathcal{T}}\bm{X}^{T+1:T+H}) ) ) ] \\
- \mathbb{E}_{x_f^2 \sim \mathbb{Q}}& [\log (\bm{\mathcal{D}}_{\text{graph}} (\delta((\hat{\bm{X}}^{T+1:T+H})^{\mathcal{T}}\hat{\bm{X}}^{T+1:T+H}) ) ) ])
\end{aligned}
\end{equation}
Here, $\bm{x}_r^1=(\bm{X}^{1:T} \mathbin{\|} \bm{X}^{T+1:T+H})$ and $\bm{x}_r^2=\delta((\bm{X}^{T+1:T+H})^{\mathcal{T}}\bm{X}^{T+1:T+H})$ denote the ground truth (real) sampled from distribution $\mathbb{P}$, $\bm{x}_f^1=(\bm{X}^{1:T} \mathbin{\|} \hat{\bm{X}}^{T+1:T+H})$ and $\bm{x}_f^2=\delta((\hat{\bm{X}}^{T+1:T+H})^{\mathcal{T}}\hat{\bm{X}}^{T+1:T+H})$ is the predicted (fake) time series sampled from distribution $\mathbb{Q}$. $\mathcal{T}$ and $\mathbin{\|}$ denote the transpose and concatenation operations, respectively, $\delta(\cdot)$ is $softmax$ normalization operation. $\alpha$ and $\beta$ represent the trade-off weights to balance the importance of $\bm{\mathcal{D}}_{\text{seq}}$ and $\bm{\mathcal{D}}_{\text{graph}}$. 

\subsection{Multivariate Time Series Prediction} 
We utilize L1 loss as training objective and jointly optimize the loss with the adversarial training loss for the generator to make multi-step predictions. Thus, the overall loss of our TrendGCN is formulated as:
\begin{equation}
\label{loss_objective}
\begin{aligned}
\mathcal{L} &= \mathcal{L}_{p}(\bm{\Theta}) + \mathcal{L}_{\text{adv}}
\end{aligned}
\end{equation}
\begin{equation}
\label{eq:l1loss}
\begin{aligned}
\mathcal{L}_{p}(\bm{\Theta}) &= \sum_{t=T+1}^{T+H}\left\|\bm{X}^{(t)}-\bm{\hat{X}}^{(t)}\right\|
\end{aligned}
\end{equation}
where $\bm{\hat{X}}^{(t)} \in R^{N \times O}$ and $\bm{X}^{(t)} \in R^{N \times O}$ denote ground truth and predicted results of all nodes at time step $t$, $\bm{\Theta}$ is all the learnable parameters in the model. 

\section{Theoretical Analysis}
In this section, we theoretically show that models which individually and independently consider the absolute error between ground truth and predictions at different time steps will result in \emph{trend discrepancy}, \john{namely, different predictions have different trends from ground truth while having the same absolute error with ground truth (See Fig. \ref{fig:curvature}(a))}, and the \john{functionality} of introducing adversarial training. 
\begin{theorem} \label{thm:rank}
Let $\mathcal{F^*}$ denotes the optimal model 
with parameters $\boldsymbol{\Theta}$ to predict the next $H$ steps data $\hat{\boldsymbol{X}}^{T+1:T+H}=\{\hat{\boldsymbol{X}}^{(T+1)}, \hat{\boldsymbol{X}}^{(T+2)}, \cdots  \\ \hat{\boldsymbol{X}}^{(T+t)}, \cdots \boldsymbol{X}^{(T+H)}\} \in \mathbb{R}^{H \times N \times O}$, given the past $T$ steps data $\boldsymbol{X}^{1:T} = \{\boldsymbol{X}^{(1)}, \boldsymbol{X}^{(2)}, \cdots \boldsymbol{X}^{(t)}, \cdots \boldsymbol{X}^{(T)}\} \in \mathbb{R}^{T \times N \times F}$, i.e., $\hat{\boldsymbol{X}}^{T+1:T+H} = \\\mathcal{F^*}(\boldsymbol{X}^{1:T}; \boldsymbol{\Theta})$, \john{using L1 loss represents prediction errors}. Then, there always exists another mapping function $\widetilde{\mathcal{F}}$ with the same loss between ground truth and predictions at each time step, \john{but with the different derivative of the predicted time series at each time step (i.e., $\frac{\mathrm{d} \mathcal{F^*} }{\mathrm{d} t}$)} 
\end{theorem}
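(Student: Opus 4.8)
The plan is to prove the statement \emph{constructively}: given the optimal predictor $\mathcal{F}^{*}$ with per-step residuals $\boldsymbol{e}^{(T+t)} := \hat{\boldsymbol{X}}^{(T+t)} - \boldsymbol{X}^{(T+t)}$, I would exhibit $\widetilde{\mathcal{F}}$ by \emph{reflecting the prediction about the ground truth}. Concretely, fix any sign pattern $s^{(1)},\dots,s^{(H)}\in\{+1,-1\}$ and let $\widetilde{\mathcal{F}}$ output $\widetilde{\boldsymbol{X}}^{(T+t)} := \boldsymbol{X}^{(T+t)} + s^{(t)}\boldsymbol{e}^{(T+t)} = (1-s^{(t)})\boldsymbol{X}^{(T+t)} + s^{(t)}\hat{\boldsymbol{X}}^{(T+t)}$ on each training instance, extended arbitrarily off the empirical distribution so that it is a legitimate mapping function. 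Since the training objective $\mathcal{L}_{p}$ in Eq.~\ref{eq:l1loss} is an entrywise $\ell_1$ sum over time steps, it decomposes, and on each node/feature/time-step it depends only on $\lVert\boldsymbol{X}^{(T+t)}-\widetilde{\boldsymbol{X}}^{(T+t)}\rVert$.

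Second, I would verify the two required properties. Equality of loss at every step is immediate: $\lVert\boldsymbol{X}^{(T+t)}-\widetilde{\boldsymbol{X}}^{(T+t)}\rVert = \lVert{-s^{(t)}\boldsymbol{e}^{(T+t)}}\rVert = \lVert\boldsymbol{e}^{(T+t)}\rVert = \lVert\boldsymbol{X}^{(T+t)}-\hat{\boldsymbol{X}}^{(T+t)}\rVert$ because $|s^{(t)}|=1$, hence $\mathcal{L}_{p}(\widetilde{\mathcal{F}})=\mathcal{L}_{p}(\mathcal{F}^{*})$ and $\widetilde{\mathcal{F}}$ is itself optimal. For the derivative, interpreted as the discrete first difference of the predicted sequence (which is what the trend arrows in Fig.~\ref{fig:curvature} depict), one computes $\widetilde{\boldsymbol{X}}^{(T+t+1)}-\widetilde{\boldsymbol{X}}^{(T+t)} = \big(\boldsymbol{X}^{(T+t+1)}-\boldsymbol{X}^{(T+t)}\big) + \big(s^{(t+1)}\boldsymbol{e}^{(T+t+1)} - s^{(t)}\boldsymbol{e}^{(T+t)}\big)$, which differs from $\hat{\boldsymbol{X}}^{(T+t+1)}-\hat{\boldsymbol{X}}^{(T+t)}$ exactly when $s^{(t+1)}\boldsymbol{e}^{(T+t+1)}-s^{(t)}\boldsymbol{e}^{(T+t)}\neq \boldsymbol{e}^{(T+t+1)}-\boldsymbol{e}^{(T+t)}$. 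Taking $s^{(t)}\equiv-1$ makes the left side the negative of the right side, so the trend of $\widetilde{\boldsymbol{X}}$ is flipped relative to $\hat{\boldsymbol{X}}$ at every step at which the residual is non-stationary; more generally, since at each step with $\boldsymbol{e}^{(T+t)}\neq\boldsymbol{0}$ the level set $\{v:\lVert v-\boldsymbol{X}^{(T+t)}\rVert=\lVert\boldsymbol{e}^{(T+t)}\rVert\}$ contains more than one point (exactly two in the scalar case), there is ample freedom to choose a sign pattern forcing a different trend at each time step while preserving the per-step loss.

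The main obstacle, I expect, is bookkeeping around two edge cases rather than the core idea. First, the degenerate instance in which $\boldsymbol{e}^{(T+t)}=\boldsymbol{0}$ for all $t$ (a perfect fit) leaves no slack; the statement should therefore be read with the standing assumption that the prediction error is not identically zero, which is the only regime in which \emph{trend discrepancy} is meaningful, and this should be stated explicitly. Second, one must present $\widetilde{\mathcal{F}}$ as an honest function of the input $\boldsymbol{X}^{1:T}$: since the reflection is defined through the known ground-truth targets on the training set, the clean way is to work with the finite dataset / empirical distribution and note that a function attaining prescribed values on finitely many inputs always exists; alternatively one restricts the claim, as the surrounding text does, to the values of the loss and the trend on the observed sequences.

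Finally, I would close by connecting back to the paper's narrative: because $\widetilde{\mathcal{F}}$ and $\mathcal{F}^{*}$ are indistinguishable under $\mathcal{L}_{p}$ yet have different, indeed possibly opposite, trends, no amount of optimizing the step-wise $\ell_1$ objective can rule out the undesirable trajectory. A \emph{sequence-level} discriminator scoring the concatenated trajectory $\boldsymbol{X}^{1:T}\mathbin{\|}\hat{\boldsymbol{X}}^{T+1:T+H}$ (Eq.~\ref{eq:adv_loss_seq}), and likewise the dependency-level discriminator (Eq.~\ref{eq:adv_loss_graph}), does separate the two, which is exactly the role of the adversarial term $\mathcal{L}_{\text{adv}}$ and the "functionality of adversarial training" referred to in the theorem's setup.
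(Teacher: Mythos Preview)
Your proposal is correct and takes essentially the same approach as the paper: the paper's construction $\hat{\boldsymbol{X}}^{(t)}_{\widetilde{\mathcal{F}}} = -\hat{\boldsymbol{X}}^{(t)}_{\mathcal{F}^*} + 2\boldsymbol{X}^{(t)}$ is exactly your reflection with $s^{(t)}\equiv -1$, and the paper then computes the derivative to obtain $m_2^{(t)} = -m_1^{(t)} + 2m^{(t)}$, concluding $m_2^{(t)}\neq m_1^{(t)}$ whenever $m_1^{(t)}\neq m^{(t)}$. Your version is slightly more general (arbitrary sign patterns), uses discrete first differences rather than the paper's limit-based derivative, and is more careful about the degenerate zero-residual case and about $\widetilde{\mathcal{F}}$ being a bona fide function of the input---points the paper glosses over.
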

\begin{proof}[Proof of Theorem~\ref{thm:rank}]
According to Eq. \ref{eq:l1loss}, the L1 loss of mapping function $\mathcal{F^*}$ and $\widetilde{\mathcal{F}}$ can be formulated as:
\begin{equation}
    \begin{aligned}
    \mathcal{L}_{\mathcal{F^*}} = \sum_{t=T+1}^{T+H}\left\|\bm{X}^{(t)}-\bm{\hat{X}}^{(t)}_\mathcal{F^*}\right\|, \mathcal{L}_{\widetilde{\mathcal{F}}} = \sum_{t=T+1}^{T+H}\left\|\bm{X}^{(t)}-\bm{\hat{X}}^{(t)}_{\widetilde{\mathcal{F}}}\right\|
    \end{aligned}
\end{equation}
Obviously, for $t\in[T+1, T+H]$ we have the following inequality:
\begin{equation}
    \begin{aligned}
    min\left\{\|\bm{X}^{(t)}-\bm{\hat{X}}^{(t)}_\mathcal{F^*}\|\right\}\leq \frac{\mathcal{L}_{\mathcal{F^*}}}{H} \leq max\left\{\|\bm{X}^{(t)}-\bm{\hat{X}}^{(t)}_\mathcal{F^*}\|\right\} \\
    min\left\{\|\bm{X}^{(t)}-\bm{\hat{X}}^{(t)}_{\widetilde{\mathcal{F}}}\|\right\}\leq \frac{\mathcal{L}_{\widetilde{\mathcal{F}}}}{H} \leq max\left\{\|\bm{X}^{({t})}-\bm{\hat{X}}^{({t})}_{\widetilde{\mathcal{F}}}\|\right\}
    \end{aligned}
\end{equation}
Further, when $\forall t\in[T+1, T+H]$, $\bm{\hat{X}}^{({t})}_{\widetilde{\mathcal{F}}} = - \bm{\hat{X}}^{(t)}_\mathcal{F^*} + 2\bm{X}^{(t)}$, we have $\mathcal{L}_{\widetilde{\mathcal{F}}}=\mathcal{L}_{\mathcal{F^*}}$. Then, recall the definition of \john{derivative}, we obtain:
\begin{equation}
    \begin{aligned}
    m_1^{(t)} &= \lim_{\Delta t\to0}\frac{\bm{\hat{X}}^{({t}+\Delta t)}_{{\mathcal{F^*}}}-\bm{\hat{X}}^{({t})}_{{\mathcal{F^*}}}}{\Delta t}\\
    m_2^{(t)} &= \lim_{\Delta t\to0}\frac{\bm{\hat{X}}^{({t}+\Delta t)}_{\widetilde{\mathcal{F}}}-\bm{\hat{X}}^{({t})}_{\widetilde{\mathcal{F}}}}{\Delta t} \\
    &= \lim_{\Delta t\to 0}\frac{(- \bm{\hat{X}}^{(t+\Delta t)}_\mathcal{F^*} + 2\bm{X}^{(t+\Delta t)}) - (- \bm{\hat{X}}^{(t)}_\mathcal{F^*} + 2\bm{X}^{(t)})}{\Delta t}\\ 
    &= -m_1^{(t)} + 2\lim_{\Delta t\to 0}\frac{\bm{X}^{(t+\Delta t)}-\bm{X}^{(t)}}{\Delta t}\\
    &= -m_1^{(t)} + 2m^{(t)}
    \end{aligned}
\end{equation}
Here, we use $m^{(t)}=\lim_{\Delta t\to 0}\frac{\bm{X}^{(t+\Delta t)}-\bm{X}^{(t)}}{\Delta t}$ to denote the derivative of ground truth mapping function at $t$ time step. Obviously, $\exists t\in [T+1, T+H]$, $m_1^{(t)} \neq m^{(t)}$ to have $m_2^{(t)}\neq m_1^{(t)}$. It indicates that equal approximation error $\mathcal{L}_{\widetilde{\mathcal{F}}}=\mathcal{L}_{\mathcal{F^*}}$ does not guarantee  equal trend of the predicted time series, i.e., $m_2^{(t)}\neq m_1^{(t)}$. 

Moreover, if we explicitly minimize the trend loss between prediction and ground truth at each time step, formalized by 
\begin{equation}
\begin{aligned}
\mathcal{L}_{trend}(\bm{\Theta}) = \sum_{t=T+1}^{T+H}\left\|\bm{m}^{(t)}-\bm{\hat{m}}^{(t)}\right\| 
\end{aligned}
\end{equation}
it is still sensitive to outlier values which leads to a spurious trend, as shown in Fig. \ref{fig:curvature}(b). To solve the above problems, we introduce adversarial training to discriminate whether predictions have the same trend as ground truth from a higher level 
instead of constraining the trend consistency at each time step.
\end{proof}
\begin{table*}
	\caption{Statistics of the six benchmarks traffic forecasting datasets. In the row of signals, `F' represents traffic flow, `S' represents traffic speed, and `O' represents traffic occupancy rate.}
	\label{dataset_statistics}
	\resizebox{0.7\textwidth}{!}{
	\begin{tabular}{l|l|l|l|l|l|l}
        \hline	
        Dataset  & PEMS03 & PEMS04 & PEMS07 & PEMS08 & METR-LA & PeMS-BAY  \\
	\hline
	\hline
	\# of nodes & 358 & 307 & 883 & 170 & 207 & 325\\
	\# of timesteps & 26,208 & 16,992 & 28,224 & 17,856 & 34,272& 52,116\\
	\# Granularity & 5min & 5min & 5min & 5min & 5min & 5min\\
	\# Start time  & 9/1/2018 & 1/1/2018  &  5/1/2017 & 7/1/2016 & 3/1/2012& 1/1/2017\\
	\# End time    & 11/30/2018 & 2/28/2018 & 8/31/2017& 8/31/2016 & 6/30/2012 & 5/31/2017\\
	\# Missing ratio$^*$ &  0.672\% & 3.182\% & 0.452\% & 0.696\% & 8.11\%& 0.003\%\\
	\# Signals $^*$ & F & F,S,O	& F & F,S,O & S & S\\
	\hline
	\end{tabular}
	}
\end{table*}

\section{Experiments}
\subsection{Dataset} \label{dataset}
To evaluate the proposed TrendGCN, we conduct extensive experiments with six traffic forecasting benchmarks, including PEMS03/04/07/08, METR-LA, and PeMS-BAY. The datasets PEMS03/04/07/08 and the preprocessing procedure are provided by \cite{astgnn}. The datasets METR-LA/PeMS-BAY and the preprocessing procedure are provided by \cite{dcrnn}.
The dataset statistics are summarized in Table \ref{dataset_statistics}.

\subsection{Baselines}

\wbq{We compare TrendGCN with 22 baselines of three categories. The details of the baselines are as follows:
\begin{itemize}
    \item The following simple temporal models are considered:  
    ARIMA \cite{arima}, considering moving average and autoregressive components; 
    FC-LSTM \cite{fc_lstm}, using fully connected LSTMs to capture the nonlinear temporal dependencies; 
    TCN \cite{tcn}, consisting of a stack of causal convolutional layers with exponentially enlarged dilation factors for sequence modeling tasks; 
    \item The following graph-based models are included: 
    DCRNN \cite{dcrnn}, integrating diffusion convolution with sequence-to-sequence architecture; 
    STGCN \cite{stgcn}, merging graph convolution with gated temporal convolutions; 
    ASTGCN \cite{astgcn}, integrating attention mechanisms to capture dynamic spatial-temporal patterns; 
    Graph WaveNet \cite{graph_wavenet}, combining graph convolution with dilated casual convolution; 
    STG2Seq \cite{stg2seq}, using a hierarchical graph convolutional structure to capture both spatial and temporal correlations simultaneously;
    STSGCN \cite{stsgcn}, utilizing localized spatial-temporal subgraph module to model localized correlations independently; 
    AGCRN \cite{agcrn}, using adaptive adjacency matrix for graph convolution and GRU to model temporal correlations; 
    LSGCN \cite{lsgcn}, using a spatial gated block and gated linear units convolution to capture complex spatial-temporal features;
    MTGNN \cite{mtgnn}, extracting the uni-directed relations among variables through a graph learning module;
    STFGNN \cite{stfgcn}, fusing various spatial and temporal graphs to handle long sequences; 
    Z-GCNETs \cite{z_gcnnets}, integrating the new time-aware zigzag topological layer into time-conditioned GCNs;
    STGODE \cite{stgode}, capturing spatial-temporal dynamics through a tensor-based ODE;
    DCGRN \cite{dgcrn}, adopts dynamic adjacency matrices by integrating dynamic context features, e.g., the speed and the time of day.
    STG-NCDE \cite{stgbcde}, designing two NCDEs for learning the temporal and spatial dependencies;
    DSTAGNN \cite{dstagnn}, designing a new spatial-temporal attention module to exploit the dynamic spatial correlation within multi-scale neighborhoods;
    RGSL \cite{rgsl}, incorporating both explicit prior structure and implicit structure together to learn a better graph structure.
    \item The following GAN-based models are included: 
    TimeGAN \cite{timegan}, utilizing GANs based on a learned embedding space to generate time series that preserves temporal dynamics.
    AST \cite{ast}, adopting a sparse transformer as the generator to learn a sparse attention map and uses a discriminator to eliminate the error accumulation at the sequence level.
    TFGAN \cite{tfgan}, applying multiple GCNs and one GRU within the generator to model spatial and temporal dependencies, respectively.
\end{itemize}}
\begin{table*}[t]
	\centering
	\caption{\wu{Performance comparison of different baselines for traffic flow forecasting on PEMS03/04/07/08 datasets. \textbf{Bold} scores and \underline{underline} scores indicate the best and the second best, respectively. 
 Superscript
 $\{a, b, c, d, e, f, g, h\}$ denotes methods with adaptive graphs, while $*$ denotes methods with dynamic graphs. }
}
	\resizebox{\linewidth}{!}{
	\label{tab:main_result}
        \begin{tabular}{c|ccc|ccc|ccc|ccc}
	    \hline 
        \multirow{2}*{Model} & \multicolumn{3}{c|}{PEMS03} & \multicolumn{3}{c|}{PEMS04} & \multicolumn{3}{c|}{PEMS07} & \multicolumn{3}{c}{PEMS08} \\
    	\cline{2-13}
    	& MAE & RMSE & MAPE & MAE & RMSE & MAPE & MAE & RMSE & MAPE & MAE & RMSE & MAPE \\
    	\hline
    	\hline
     	ARIMA (JTE 2003) & 35.41 & 47.59 & 33.78 \%& 33.73 &48.80 &24.18\% & 38.17 & 59.27& 19.46\%& 31.09&44.32 &22.73\%\\
     	FC-LSTM (NeurIPS 2015) & 21.33 & 35.11 &23.33\%& 26.77 &40.65 &18.23\% &29.98& 45.94& 13.20\% &23.09& 35.17 &14.99\% \\
		TCN (ICLR 2018) & 19.32 &33.55 &19.93\%& 23.22 &37.26 &15.59 \%&32.72 &42.23 &14.26\%& 22.72& 35.79 &14.03\% \\
		\hline
		DCRNN (ICLR 2018) & 17.99 &30.31 & 18.34\% & 21.22 & 33.44 &14.17\%& 25.22 & 38.61 & 11.82\% & 16.82 & 26.36 & 10.92\%\\
            STGCN (IJCAI 2018) & 17.55 &30.42 & 17.34\% & 21.16 & 34.89 &13.83\%& 25.33 & 39.34 & 11.21\% & 17.50 & 27.09 & 11.29\%\\
            ASTGCN (AAAI 2019) & 17.34 &29.56 & 17.21\% & 22.93 & 35.22 &16.56\%& 24.01 & 37.87 & 10.73\% & 18.25 & 28.06 & 11.64\%\\
            $^{a}$GraphWaveNet (IJCAI 2019) & 19.12 &32.77 & 18.89\% & 24.89 & 39.66 &17.29\%& 26.39 & 41.50 & 11.97\% & 18.28 & 30.05 & 12.15\%\\
            STG2Seq (IJCAI 2019) & 19.03& 29.83&21.55\%& 25.20& 38.48& 18.77\%& 32.77& 47.16& 20.16\% &20.17 &30.71& 17.32\%\\
		STSGCN (AAAI 2020) & 17.48& 29.21& 16.78\% & 21.19& 33.65& 13.90\%& 24.26& 39.03& 10.21 \%& 17.13& 26.80& 10.96\%\\
            $^{b}$AGCRN (NeurIPS 2020) & 16.03 & 28.52 & \underline{14.65}\% & 19.89& 32.86& 13.37\% & 22.37& 35.70& 9.55 \%& 16.13& 25.52&10.21 \%\\
            LSGCN (IJCAI 2020) & 17.94 &29.85 &16.98 \%&21.53 &33.86 &13.18\% &27.31& 41.46 &11.98\%& 17.73 &26.76 &11.20\%\\
		$^{c}$MTGNN (KDD 2020) & \underline{15.10}& \underline{25.93}& 15.67\% & 19.32& 31.57&13.52\% & 22.07& 35.80& {9.21}\% & 15.71 & \underline{24.62} & 10.03\% \\
		STFGNN (AAAI 2021) & 16.77& 28.34& 16.30\% & 19.83& 31.88&13.02\% & 22.07& 35.80& 9.21\% & 16.64& 26.22& 10.60\%\\
		$^{d}$Z-GCNETs (ICML 2021) & 16.64 & 28.15 & 16.39 \%& 19.50& 31.61& {12.78} \% & {21.77} & {35.17} & 9.25\% & {15.76} & {25.11}& {10.01}\% \\
		STGODE (KDD 2021) & 16.50 &27.84 &16.69\%& 20.84 &32.82 &13.77\% &22.59 &37.54 &10.14\%& 16.81 &25.97& 10.62\% \\
		$^{e}$STG-NCDE (AAAI 2022) & 15.57& 27.09& 15.06\% &19.21 &\underline{31.09} &12.76\%& 20.53 & \underline{33.84} &8.80\%& \underline{15.45}& 24.81& \underline{9.92} \%\\
		$^{f*}$DSTAGNN (ICML 2022) & 15.57& 27.21 &14.68 \% &19.30 &31.46 &12.70\%& 21.42 &34.51 &9.01\%& 15.67& 24.77 & 9.94\%\\
            $^{g}$RGSL (IJCAI 2022) & 15.65 & 27.98 & 14.67\%  & \underline{19.19} & 31.14 & \underline{12.69}\% & 20.73 & 34.48 & \underline{8.71}\% & 15.49 & 24.80 & 9.96\% \\
            \wu{$^{h*}$DGCRN (TKDD 2023)} & 15.98& 27.41& 17.73\% & 20.39& 32.34& 14.64\%& \underline{20.52} & \textbf{33.56} & 9.09\% &16.22 &26.10  & 12.06\%\\
		\hline
		 TrendGCN (ours)& \textbf{14.77} & \textbf{25.66} & \textbf{13.92}\% & \textbf{18.81}& \textbf{30.68}& \textbf{12.25}\% & \textbf{20.43} & 34.32 & \textbf{8.51}\% & \textbf{15.15}& \textbf{24.26}&\textbf{9.51}\% \\ 
        \hline
	\end{tabular}
    }
\end{table*}
\subsection{Experimental Settings}
We first split each dataset into the training set, validation set, and test set by a ratio of 6:2:2 for PEMS03/04/07/08 and a ratio of 7:1:2 for METR-LA/PeMS-BAY. We use the historical one-hour data ($T=12$) to forecast the next-hour data ($H=12$). 
Three metrics are utilized to evaluate model performance, i.e., MAE, RMSE, and MAPE. For the hyper-parameters of TrendGCN, we set the number of hidden units to 64 for GRU cells, GRU layers to 2, GCN layers to 2 by default. The numbers of input features are $F=1$ (flow) for PEMS03/04/07/08 and $F=2$ for METR-LA/PeMS-BAY (speed and time stamps) following \cite{agcrn} and \cite{dcrnn}, respectively. The number of the output feature is $O=1$ for all datasets. We use $\lambda_1=1$, $\lambda_2=1$, and $\lambda_3=1$ in Eq. \ref{eq:node_time_expand} using $\Delta_1=+, \Delta_2=+$ by default. We set $\alpha=0.01$ and $\beta=1.0$ to trade-off the importance of sequence and graph level adversarial training, Adam optimizer with learning rate $\eta=0.003$ and batch size 64, and the \john{spatial and temporal} embedding dimension $d$ are both set to 4, 6, 10, 4, 10, and 10 for PEMS03, PEMS04, PEMS07, PEMS08, METR-LA, and PeMS-Bay datasets, respectively. 
For the experimental results of baselines, we directly cite the best results from their original paper. Otherwise, we report results by running authors-provided source codes under optimal hyper-parameter settings they report in the paper. The experiments are conducted on a computer with a single 24GB NVIDIA GeForce RTX 3090 card. 

\subsection{Performance Comparison and Analysis}
We report our model performance on average 5 times running. The average prediction performances of 12 horizons on PEMS03/04/07/08 are summarized in Table \ref{tab:main_result}, we observe that TrendGCN achieves state-of-the-art on all datasets, except RMSE metrics on the PEMS07 dataset. 
We guess that it is difficult for GANs to discriminate useful signals since the PEMS07 dataset has a large number of traffic nodes (i.e., 883). Besides, we notice that adaptive graph-based methods, e.g., AGCRN, MTGNN, STG-NCDE, RGSL, and TrendGCN(ours) significantly outperform pre-defined graph-based methods, e.g., DCRNN, STGCN, and ASTGCN. The dynamic graph-based methods (DGCRN, DSTAGNN, and TrendGCN(ours)) have an advantage in average predictive performance compared to those using static graphs.
In addition, we compare TrendGCN with other SOTA GAN-based models \cite{timegan,ast,tfgan} on METR-LA and PeMS-Bay. The results in Table. \ref{tab:main_result_2} demonstrates that TrendGCN outperforms best, \wu{which further indicates the effectiveness of modeling dynamics and jointly considering trends and dependencies.}
\begin{table}[t]
	\centering
	\caption{Performance comparison of GAN-based models for traffic speed forecasting on METR-LA and PeMS-BAY datasets with Horizon 12 (60 min).}
	\resizebox{\linewidth}{!}{
	\label{tab:main_result_2}
        \begin{tabular}{c|ccc|ccc}
	    \hline 
        \multirow{2}*{Model} & \multicolumn{3}{c|}{METR-LA} & \multicolumn{3}{c}{PeMS-BAY}\\
    	\cline{2-7}
    	& MAE & RMSE & MAPE & MAE & RMSE & MAPE\\ 
    	\hline
    	\hline
  		TimeGAN (NeurIPS 2019) &4.43 & 8.67 & 13.53\%&2.35&5.16&5.59\%\\ 
		AST  (NeurIPS 2020) & 4.05 & 8.14 &12.80\%&2.27&4.96&5.43\% \\  
		TFGAN (KBS 2022) & 3.83 & 7.98 & 12.72\% & 1.97 & 4.48 & 4.63\%\\
		\hline
		 TrendGCN (ours)& \textbf{3.55} & \textbf{7.39} & \textbf{10.27}\% & \textbf{1.92}& \textbf{4.46}& \textbf{4.51}\%\\ 
        \hline
	\end{tabular}
    }
\end{table}
\begin{table}
    \caption{The impact of different loss objective components in Eq. (\ref{loss_objective}) on prediction performances (MAE/RMSE).}
    \label{tab:loss_ablation}
    \centering
    \resizebox{0.8\linewidth}{!}{
    \begin{tabular}{c|c|c|c}
    \hline
    & TrendGCN & w/o $\mathcal{L}_{\text{adv}}$ & w/o $\mathcal{L}_{p}(\bm{\Theta})$ \\
    \hline
    \hline
     PEMS04 & 18.81/30.68 & 19.04/31.62 & 43.32/64.89 \\
     \hline
    \end{tabular}
    }
\end{table}
\begin{table}[]
    \caption{Complexity and execution efficiency analysis of models on PEMS04 dataset.} 
    \label{tab:complexity}
    \centering
    \resizebox{\linewidth}{!}{
    \begin{tabular}{l|c|c|c}
    \hline
    PEMS04 & TrendGCN & RGSL (IJCAI 2022) & DSTAGNN (ICML 2022) \\ 
    \hline
    \hline
     \# Parameters & 0.45M & 0.87M & 3.58M \\
     \# GPU Memory & 5.38GB & 7.72GB & 8.77GB \\
     Training Cost (epoch) & 49.32s & 61.01s & 116.20s \\
     Inference Cost (epoch) & 1.83s & 3.11s & 10.02s \\
     Complexity (per Layer) & 
     $\bm{\mathcal{O}}(N^2d+Td^2)$ &
     $\bm{\mathcal{O}}(N^2d+Td^2)$ & 
     $\bm{\mathcal{O}}(N^2d+kTd^2)$\\
     \hline
     MAE/RMSE & 18.81/30.68 & 19.19/31.14 &19.30/31.46 \\
     \hline
    \end{tabular}
    }
\end{table}
\begin{figure*}[t]
	\centering
	\includegraphics[width=0.9\linewidth]{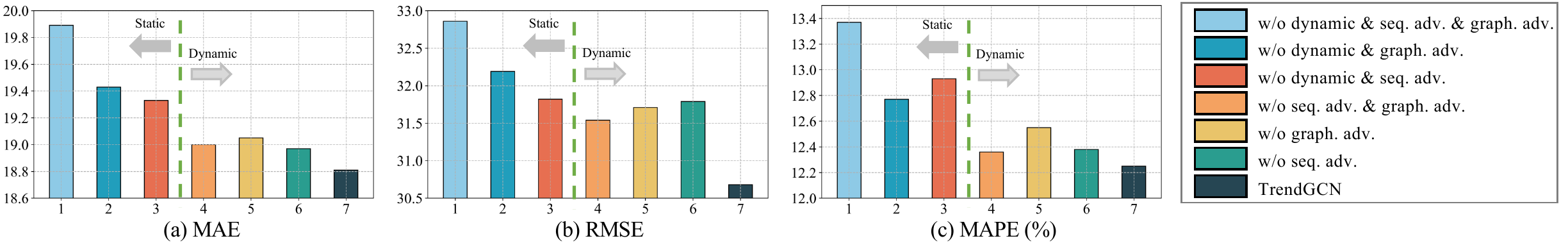}
	\caption{Ablation study of our TrendGCN with(w) or without(w/o) proposed components on PEMS04 dataset.}
	\label{fig:ablation}
\end{figure*}
\begin{figure*}[t]
	\centering
	\includegraphics[width=0.9\linewidth]{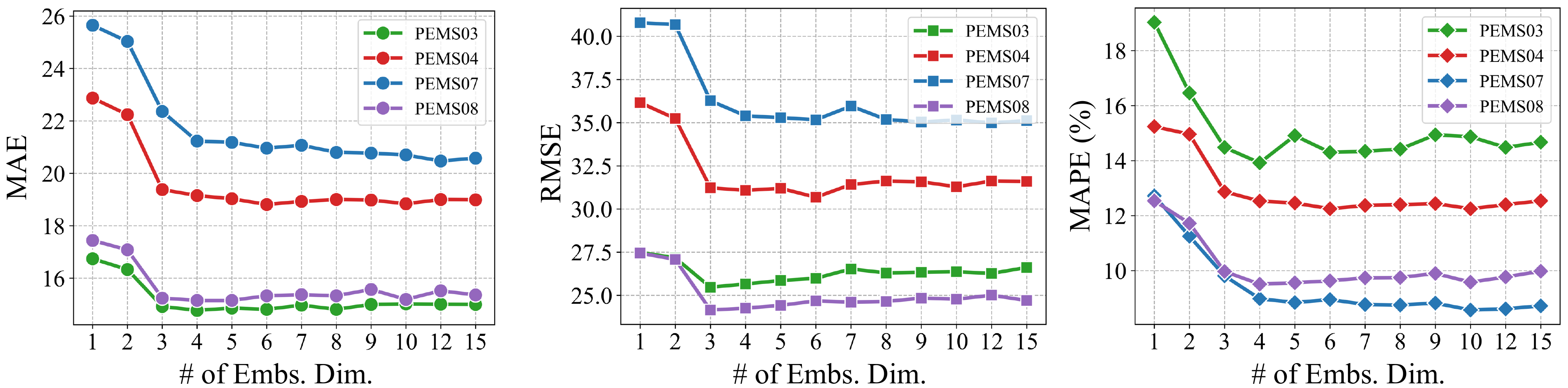}
	\caption{Influence of representation dimensions of \john{the spatial and temporal} embeddings on PEMS03/04/07/08 datasets.}
	\label{fig:ablation_dim}
\end{figure*}
\subsection{Ablation Study}
We conduct an ablation study with its variants to verify the effectiveness of each component in TrendGCN. As shown in Fig. \ref{fig:ablation}, variants with dynamic graphs outperform the ones with a static graph. Besides, adversarial training significantly improves the prediction performance of all variants. \wu{Adversarial training at the graph level is better than at the sequence level, which implies that the dependencies between all nodes may play a stronger role in eliminating discrepancies.}
\wbq{In addition, we compare adversarial loss $\mathcal{L}_{\text{adv}}$ with $\mathcal{L}_{p}(\bm{\Theta})$ in Table \ref{tab:loss_ablation}. It demonstrates that removing either $\mathcal{L}_{\text{adv}}$ or $\mathcal{L}_{p}(\bm{\Theta})$ will result in a drop in prediction performance, and $\mathcal{L}_{p}(\bm{\Theta})$ plays a vital role in supervised learning}. 

\subsection{Hyperparameter Study}
Since the embedding dimension (i.e., $d_e$) of spatial embeddings and temporal embeddings has a great impact on model performance and computational cost, we present prediction performance at different settings, as shown in Fig. \ref{fig:ablation_dim}. \wbq{We observe that the basic principle is that $d$ should not be set too small (insufficient representation) and too large (over-fitting and time-consuming problem)}. The optimal embedding dimension should be set as 4, 6, 10, and 4 for PEMS03, PEMS04, PEMS07, and PEMS08 datasets, respectively. 
In addition, since adversarial learning is sensitive to weights, we discuss the influence of loss trade-off weights $\alpha$ and $\beta$ of $ \mathcal{L}_{\text{adv}}$ in Fig. \ref{fig:loss}(a). We find that on most datasets, the MSE is relatively stable when the trade off ratios are in the range $[0, 0.01, 0.05, 0.1]$. 
\begin{figure}[t]
	\centering
        \includegraphics[width=\linewidth]{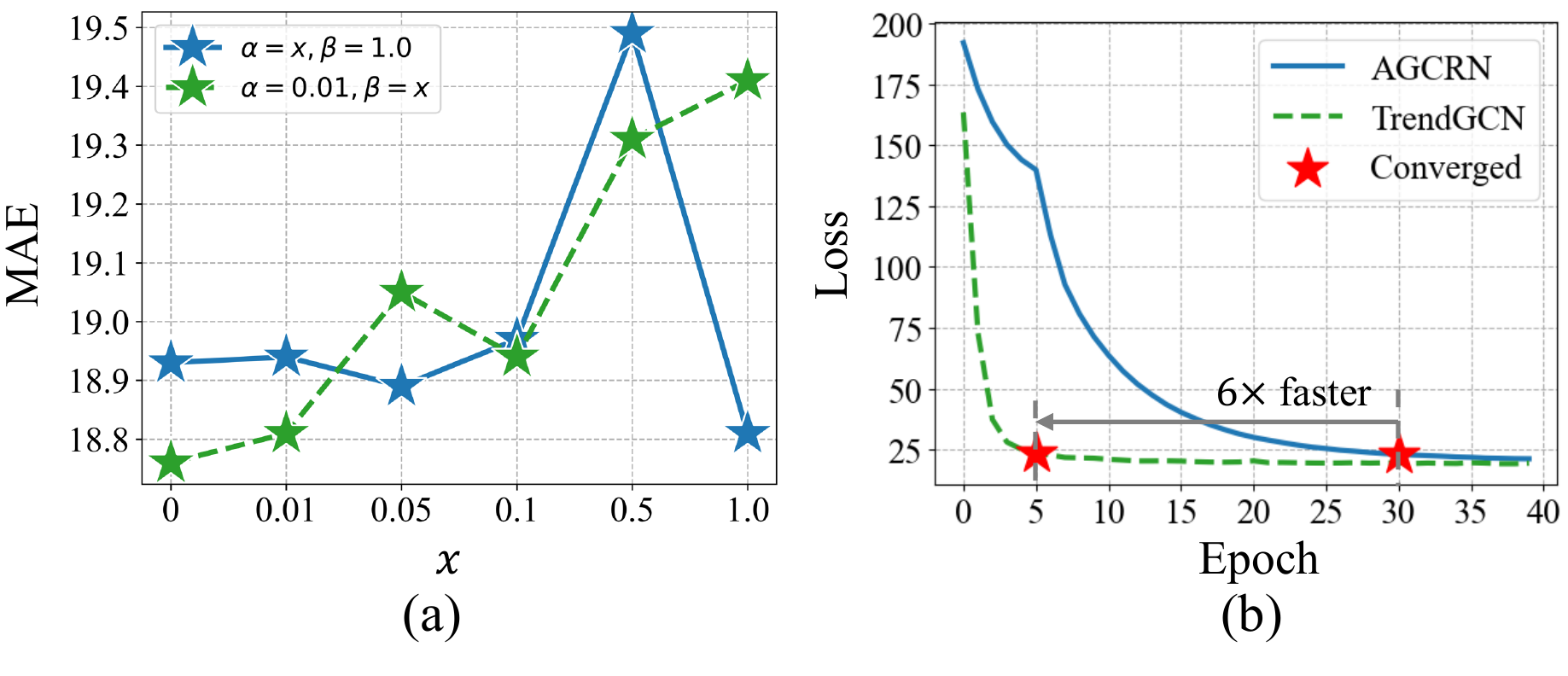}
	\caption{(a) The impact of loss trade-off weights $\alpha$ and $\beta$ of $ \mathcal{L}_{\text{adv}}$. (b) The Convergence speed comparison with AGCRN. Both on PEMS04 dataset.}
	\label{fig:loss}
\end{figure}
\begin{figure}[t]
	\centering
	\includegraphics[width=\linewidth]{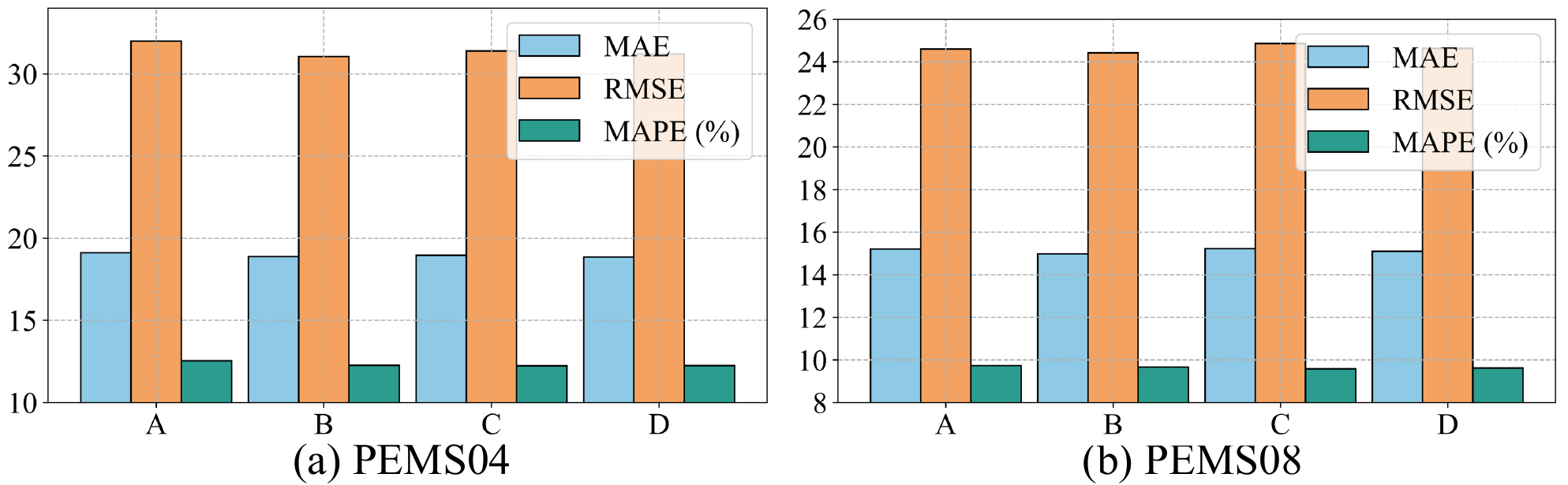}
	\caption{Performance comparison of four widely-used combinations for graph construction under the unified scheme on PEMS04/08 datasets.}
    \label{tab:coefficient}
\end{figure}
\begin{table}[]
    \caption{Prediction error (MAE/RMSE) of different methods on original data (1st row), Gaussian-noise polluted data (2nd row), and the relative increment ratio of the error (3rd row, smaller is better).
    }
    \label{tab:robustness}
    \centering
    \resizebox{\linewidth}{!}{
    \begin{tabular}{c|c|c|c}
    \hline
    & TrendGCN & RGSL (IJCAI 2022) & DSTAGNN (ICML 2022) \\
    \hline
    \hline
     PEMS04 & 18.81/30.68 & 19.19/31.14 &19.30/31.46\\
     $+\mathcal{N}(0, 1)$ & 24.91/37.36 & 27.98/40.81 & 27.22/40.28\\ 
     $+\Delta$errors & +32.43\%/+21.77\%& +45.81\%/+31.05\%& +41.04\%/+28.04\% \\
     \hline
    \end{tabular}
    }
\end{table}
\wbq{\subsection{Complexity Analysis and Cost}
To compare the computation cost of TrendGCN and SOTA, we show their complexity and execution efficiency in Table \ref{tab:complexity} and Fig. \ref{fig:loss}. As can be seen, our approach has better efficiency in both training (12\%-50\% less time) and inference (50\%-80\% less time), and smaller memory footprint (20\%-30\% less) compared with SOTA. The results indicate that TrendGCN can achieve a good trade-off between computational cost and forecasting accuracy. Besides, our TrendGCN accomplishes an average of 6 times faster convergence speed compared with AGCRN, as shown in Fig. \ref{fig:loss}(b).}

\begin{figure*}[t]
	\centering
	\includegraphics[width=0.9\linewidth]{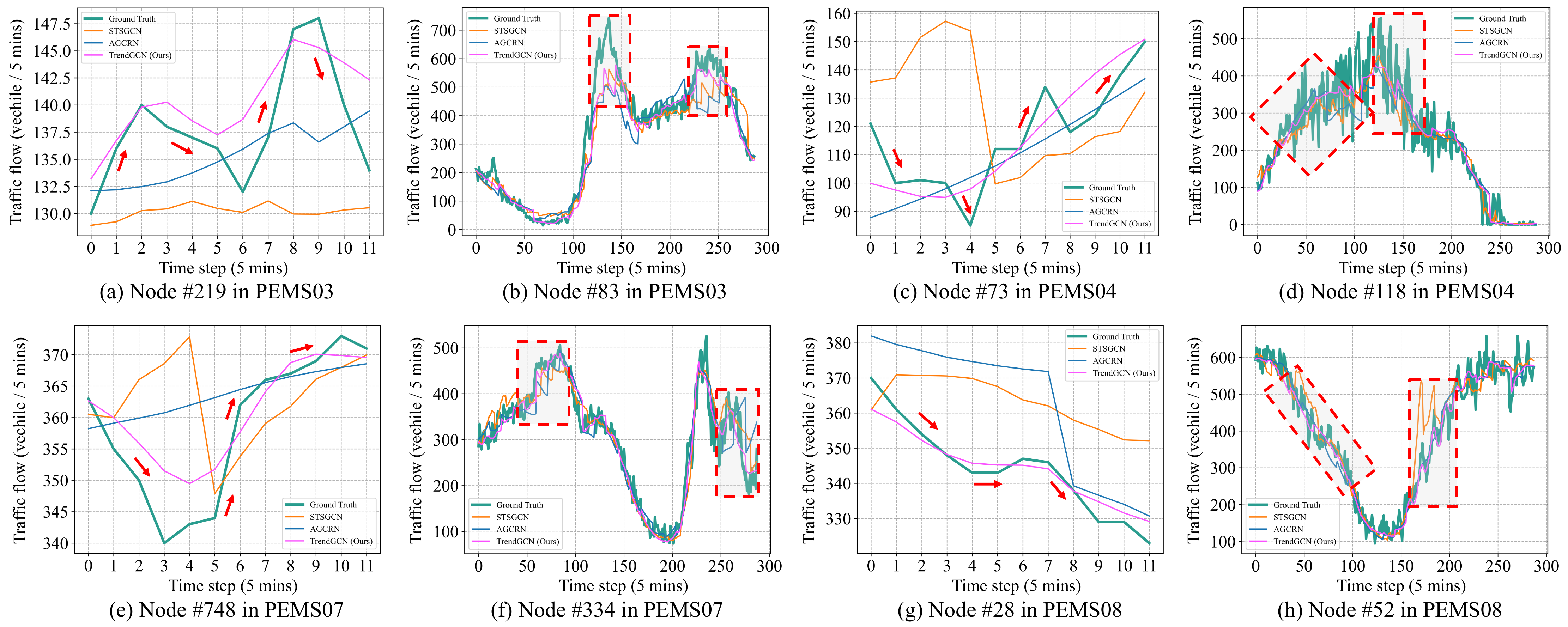}
	\caption{Comparison of short (12 steps)-(a)(c)(e)(g) and long (288 steps)-(b)(d)(f)(h) term prediction curves between STSGCN, AGCRN, and our TrendGCN on a snapshot of the test data of four datasets. Note that, the predicted time series for the whole day period (288 steps) is simply obtained by concatenating all the short-term predictions (12 steps) along the time axis (and remove overlaps), which is a common practice widely used in \cite{dstagnn,li2021spatial,agcrn}, so that a better visualization of the prediction quality during different time of the day can be presented. }
	\label{fig:loss_horizon}
\end{figure*}
\begin{figure*}[t]
\centering
\includegraphics[width=\linewidth]{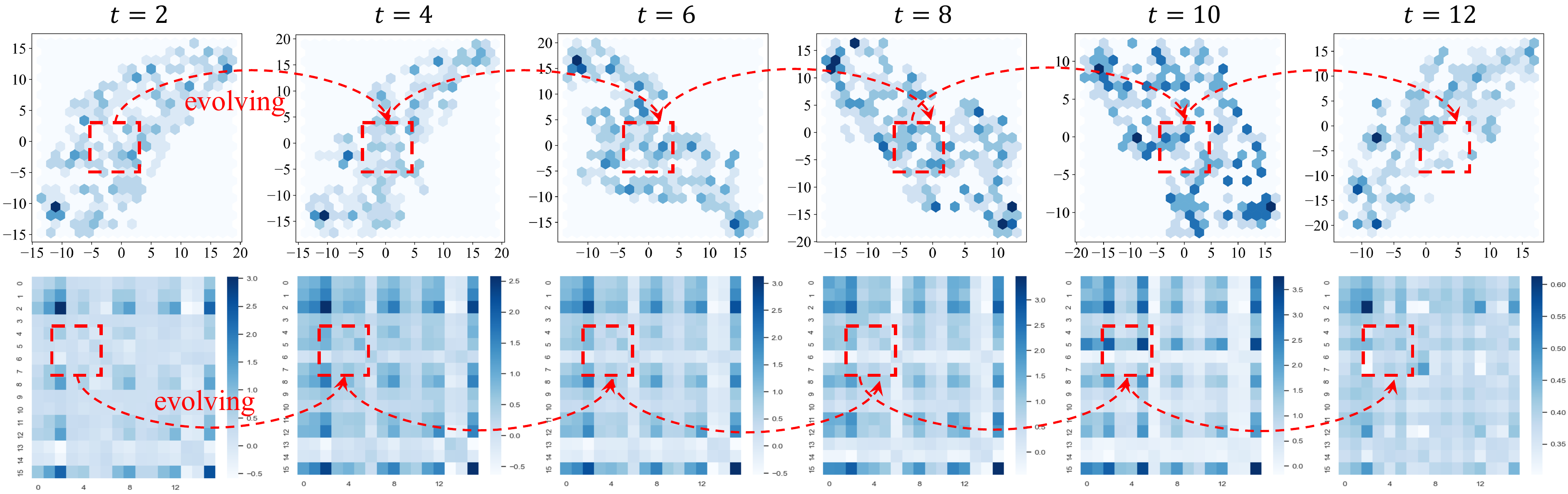}
    \caption{Visualization of 2D projection of UMAP on spatial embeddings (Upper) and the heatmap of learned graphs (Lower) at $t\in\{2, 4, 6, 8, 10, 12\}$ time steps.}
\label{fig:dynamic_graph}
\end{figure*}

\subsection{Robustness Exploration}
\wbq{To test the robustness of TrendGCN, we conduct experiments by injecting Gaussian noises into the raw traffic data of PEMS04 dataset. The results in Table \ref{tab:robustness} show the increasing errors of TrendGCN are much less than SOTA for the polluted data, verifying the robustness of TrendGCN. One of the possible reasons for such results is that TrendGCN can capture the global trend and local dynamics of traffic data, which helps to reduce the risk of local over-fitting.}
\subsection{Visualization}
We compare the short (12 steps) and long (288 steps) term prediction curves between STSGCN, AGCRN, and our TrendGCN on a snapshot of the test data of four datasets, as shown in Fig. \ref{fig:loss_horizon}. 
We observe that our proposed TrendGCN can significantly bridge the trend discrepancy between prediction and ground truth for both short-term and long-term prediction, which confirms our intuition. In particular, for the fast-varying periods (dashed boxes), the predictions of TrendGCN are much closer to ground truth, which shows the stronger adaptive ability of TrendGCN for changes.
\wbq{Furthermroe, we visualize  the learned dynamic adaptive graphs at the different time steps, aiming to discuss the interpretation of TrendGCN. For better visualization, we randomly select 16 nodes on PEMS04 dataset, as shown in Fig. \ref{fig:dynamic_graph}. We have the following observations: 1) Although many methods using pre-defined graphs (static) have achieved comparable performance, they generally face the problem of data sparsity which harms the propagation of model's gradient significantly; 2) Dynamic adaptive graphs can flexibly capture the complex spatial-temporal dependencies between all nodes at different time steps.}

\subsection{Graph Construction Discussion}
We propose a unified scheme (see Eq. \ref{eq:node_time_1}) to effectively couple the spatial (node-wise) and temporal (time-wise) embeddings through a gate module and use the integrated embeddings to construct graphs changing over time.
The choices of $\Delta_1, \Delta_2$ can be the same or different. 
Here, four widely-used combinations with $\bm{\lambda}^1 = \bm{\lambda}^2 = \bm{\lambda}^3 = \bm{\lambda}^4 = 1$ are discussed as follows:
\begin{equation}
\label{eq:node_time_2}
\begin{aligned}
^A\bm{\mathcal{A}}_{ij}^{(t)} &= \bm{\lambda}^1\left \langle \text{Dpt}\left(\text{LN}(\bm{e}_{{\text{node}}}^{(i)} \mathbin{\|} \bm{e}_{{\text{time}}}^{(t)})\right), \text{Dpt}\left(\text{LN}(\bm{e}_{{\text{node}}}^{(j)} \mathbin{\|} \bm{e}_{{\text{time}}}^{(t)})\right)\right \rangle \\
^B\bm{\mathcal{A}}_{ij}^{(t)} &= \bm{\lambda}^2\left \langle \text{Dpt}\left(\text{LN}(\bm{e}_{{\text{node}}}^{(i)} \odot \bm{e}_{{\text{time}}}^{(t)})\right), \text{Dpt}\left(\text{LN}(\bm{e}_{{\text{node}}}^{(j)} \odot \bm{e}_{{\text{time}}}^{(t)})\right)\right \rangle \\
^C\bm{\mathcal{A}}_{ij}^{(t)} &= \bm{\lambda}^3\left \langle \text{Dpt}\left(\text{LN}(\bm{e}_{{\text{node}}}^{(i)} + \bm{e}_{{\text{time}}}^{(t)})\right), \text{Dpt}\left(\text{LN}(\bm{e}_{{\text{node}}}^{(j)} \odot \bm{e}_{{\text{time}}}^{(t)})\right)\right \rangle \\
^D\bm{\mathcal{A}}_{ij}^{(t)} &= \bm{\lambda}^4\left \langle \text{Dpt}\left(\text{LN}(\bm{e}_{{\text{node}}}^{(i)} \odot \bm{e}_{{\text{time}}}^{(t)})\right), \text{Dpt}\left(\text{LN}(\bm{e}_{{\text{node}}}^{(j)} + \bm{e}_{{\text{time}}}^{(t)})\right)\right \rangle 
\end{aligned}
\end{equation}
As can be seen in Fig. \ref{tab:coefficient}, we derive the following findings: (1) The default setting of $\Delta_1=+, \Delta_2=+$ 
in TrendGCN 
achieves optimal performance (see Table \ref{tab:main_result}), which signifies the equal importance of homogeneous and heterogeneous interactions in the spatial-temporal domains. (2) TrendGCN is not sensitive to the choices of $\Delta_1, \Delta_2$ (the color bars are almost the same height) which further verifies our method enhances the robustness of traffic forecasting.

\section{Conclusions and Future Work}
In this paper, we proposed TrendGCN, a novel model for traffic forecasting that extends the flexibility of GCNs and the distribution-preserving capacity of generative and adversarial loss. Our approach addresses the challenges of capturing dynamics and maintaining robustness by introducing dynamic adaptive graph generation and adversarial dynamic trend alignment. Extensive experiments on six benchmarks and theoretical analyses demonstrate the superiority of TrendGCN. For further work, we will study the following two aspects: 1) investigating stronger methods to capture dynamic spatial-temporal dependencies, e.g., the mixture of experts (MoE); 2) exploring more effective approaches to 
enhance the robustness of traffic forecasting, e.g., taking higher-order derivatives of time series.


\section{Acknowledgments}
This work was partially supported by the National Natural Science Foundation of China (No. 62276099).

\bibliographystyle{ACM-Reference-Format} 
\balance
\bibliography{ref} 




\end{document}